\title{Allocation of Multi-Robot Tasks with Task Variants}
\author{Zakk Giacometti and Yu Zhang}
\newtheorem{coro}{Corollary}
\newtheorem{theorem}{Theorem}
\begin{document}
\maketitle

\begin{abstract}
Task allocation has been a well studied problem.
In most prior problem formulations,
it is assumed that each task 
is associated with a unique set of resource requirements.
In the scope of multi-robot task allocation problem, 
these requirements can be satisfied by
a coalition of robots. 
In this paper, 
we introduce a more general formulation of multi-robot task allocation problem that allows more than one option for specifying the set of task requirements--satisfying any one of the options will satisfy the task. 
We referred to this new problem as the multi-robot task allocation problem with task variants. 
First, we theoretically show that this extension fortunately does not impact the complexity class, which is still NP-complete.
For solution methods,
we adapt two previous greedy methods for the task allocation problem without task variants to solve this new problem and analyze their effectiveness. 
In particular, we ``flatten'' the new problem to the problem without task variants, modify the previous methods to solve the flattened problem, 
and prove that the bounds still hold.
Finally, we thoroughly evaluate these two methods 
along with a random baseline to demonstrate their efficacy for the new problem. 
\end{abstract}

\section{Introduction}
Given a set of tasks and a set of resources, 
the problem of task allocation 
is to determine the allocation of resources  (e.g., robots) to tasks so as to 
maximize the overall utility achieved. 
The task allocation problem has been well studied in the robotics community. \cite{gerkey_taxonomy_2004} classifies the problem according to three categories: Single-Task/Multi-Task (ST/MT) robot, Single-Robot/Multi-Robot (SR/MR) task, and Instantaneous/Time-Extended Assignment (IA/TA).
\cite{korash_taxonomy_2013} extends this taxonomy by considering interrelated utilities and constraints among the tasks.
In this paper, we focus on the allocation of multi-robot tasks with
single-task robots and instantaneous assignments. 
This problem has many applications in the real-world,
such as for urban search and rescue,
automated manufacturing and warehousing, etc. 
In this formulation, each task has a set of pre-specified resource requirements and 
each robot is associated with a set of resources (a.k.a. capabilities). 
A task can be satisfied if the set of robots
assigned to it satisfy the resource requirements. 


One assumption made in prior problem formulations is that 
each task is associated with a unique set of resource requirements.
This however may not always be the case for real-world applications. 
Consider a monitoring task for an open area. 
It may be achieved by multiple mobile robots with cameras
or a single UAV. 
The resources required for each way of achieving the task
are very different. 
In this paper, we set out to address the multi-robot
task allocation problem with task variants,
which represent different ways to achieve a task. 
We first theoretically prove that this extension
does not impact the complexity class,
which is still NP-complete. 
To provide a solution, 
we adapt two previous greedy methods that are introduced
for the multi-robot task allocation problem. 
We show that it is not difficult to compile the 
new problem into a ``flattened'' problem,
for which the previous method would apply. 
With slight modifications to the proofs, we show that the solution bounds carry over to the new problem. 
Finally, we thoroughly evaluate these two methods along with a random baseline to demonstrate their efficacy for the new problem.



\section{Related Work}

The multi-robot task allocation problem is known to be NP-complete \cite{gerkey_taxonomy_2004}, and is closely related to the coalition formation problem~\cite{SANDHOLM1999209} in the multi-agent community. 
In fact,
\cite{shehory_methods_1998} first looked at the task allocation problem via coalition formation 
and provided a greedy method based on
the set covering problem. 
\cite{service_coalition_2011} studied the task allocation problem that maximizes utility rather than minimizing cost, and showed that this seemingly innocuous change resulted in very different solution bounds. 
A greedy heuristic was provided to solve this problem. \cite{zhang_considering_2013} further analyzed  
this problem and proposed a new heuristic that incorporates the influence of resource requirements between tasks when making assignments. 
Our work adapted the heuristics in these earlier works to solve the new problem with task variants. 

Our work falls in line with many prior approaches that aimed at extending the applicability of the task allocation problem. 
\cite{vig_coalition_2006,vig_coalition_2007} adapted prior task allocation methods to work in multi-robot systems with additional constraints and preferences (e.g., balanced workload) that are present in physical robotic systems.
\cite{699077} extended the problem to work with decentralized task allocation. 
\cite{6224910} considered dynamic and environmental influences for allocation in distributed robot systems. 
\cite{LIEMHETCHARAT201441} adapted task allocation to accommodate synergies between tasks, and \cite{5980500} studied the problem with precedence constraints between the tasks. 
The Complex Dependencies category in \cite{korash_taxonomy_2013} (e.g. CD[ST-MR-IA]) is also of interest. It describes problems in which a set of subtasks (or task decompositions) must be chosen in addition to the problem of choosing optimal assignments. The optimal task decompositions are not known prior to assigning robots.
A recent work~\cite{cano_solving_2018} that studied the task allocation problem with task variants, applying to the domain of process scheduling. 
However, the problem studied was  single-robot tasks with multi-task robots (i.e., MT-SR)
while we are addressing the multi-robot task allocation problem (i.e., ST-MR). 

On the aspect of task variants,
the information invariant theory~\cite{donald1995information} discussed different ways that a task may be achieved
by different sensori-computational systems,
which are considered equivalent for achieving the task. 
\cite{1570327,6381528} applied this idea
to the problem of coalition formation,
resulting in greater flexibility in dynamic environments compared
to traditional approaches. 
The task variants in our work can be 
considered as static ways of capturing  information invariant for tasks. 


\section{Problem Formulation}
Following prior work, we formulate our variation of the ST-MR-IA problem below, only redefining tasks as sets of task configurations (i.e., task variants). A multi-robot task allocation problem with task variants is a tuple ($R$, $C$, $T$, \textbf{W}, \textbf{V}, $Cost$, $U$):
\begin{itemize}
\item A set of robots $R = \{r_1, r_2, ...\}$. Each robot $r_i$ is associated with a vector $B_i$ of H real non-negative capabilities, in which H is assumed to be a constant that specifies the maximum number of capabilities for a domain.
\item A set of coalitions, $C = \{c_1, c_2, ...\}$. Each coalition $c_j$ satisfies $c_j \subseteq R$.
\item A set of tasks  to be assigned $T = \{t_1, t_2, ...\}$. Each task $t_k$ is associated with a set of task configurations $T_k = \{\tau_{k,1}, \tau_{k,2} ...\}$, where each task configuration $\tau_{k, l}$ requires a vector $P_{k, l}$ of $H$ real non-negative capabilities for achieving task $t_k$ using configuration $\tau_{k, l}$.
\item A vector \textbf{W} of real non-negative costs for capabilities: the use of the capability indexed by $h$ incurs \textbf{W}$[h]$ cost per unit.
\item A vector \textbf{V} of real positive rewards for tasks: achieving task $t_k$ with any of its configurations receives \textbf{V}$[k]$ reward.
\item A function Cost: $C \times \tau \rightarrow \mathcal{R}^0$ that computes real non-negative communication and coordination costs for an assignment based on the coalition and task configuration pair,
where $\tau$ is used above to denote the union of task configurations for all the tasks.
\item A utility function $U$ for assignments, defined as:

\noindent
\begin{eqnarray}
U_s(m_{jk,l}) = \textbf{V}[k] - \sum_h P_{k,l}[h]\textbf{W}[h] - Cost(c_j, \tau_{k,l}) \\
U(m_{jk, l}) =
\begin{cases}
U_s(m_{jk,l}) & \forall h: \sum_{r_i \in c_j} B_i[h] \geq P_{k,l}[h] \\
0 & \text{otherwise}
\end{cases}
\end{eqnarray}

in which $m_{jk,l}$ represents an assignment of a coalition $c_j$ to a task configuration $\tau_{k,l}$.
\end{itemize}

The problem is then to search for a set of assignments $S$ that maximizes:
\begin{equation}
    \underset{m_{jk,l} \in S}{\sum}U(m_{jk,l})
\end{equation}
subject to the constraints
that no assignments must have overlapping robots
and any task must have at most one task configuration assigned in the solution. 
Next, we analyze the complexity of this new formulation

\begin{theorem}
The decision problem of whether there exists an assignment of no less than a given utility value for the multi-robot task allocation problem with task variants is NP-complete. 
\end{theorem}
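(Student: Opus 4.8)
The plan is to establish the two standard ingredients of an NP-completeness proof: membership in NP and NP-hardness.

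For membership in NP, I would use a candidate solution set $S$ of assignments as the certificate. Since each task may receive at most one configuration, $|S| \le |T|$, and each assignment $m_{jk,l}$ is described by an index into $C$ together with a task–configuration index, so the certificate has size polynomial in the size of the input tuple (here it matters that $C$ is supplied explicitly as part of the input, so $|C|$ counts toward the input size). A polynomial-time verifier then checks: (i) the coalitions appearing in $S$ are pairwise robot-disjoint; (ii) no task appears twice; and (iii) the total utility $\sum_{m_{jk,l}\in S} U(m_{jk,l})$ is at least the given threshold, where each term $U(m_{jk,l})$ is obtained from the $H$ capability comparisons $\sum_{r_i \in c_j} B_i[h] \ge P_{k,l}[h]$ together with the arithmetic defining $U_s$. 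All of these steps run in time polynomial in the input size, so the problem lies in NP.

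For NP-hardness, I would reduce from the multi-robot task allocation problem without task variants, which is known to be NP-complete (as cited in the Related Work). The key observation is that the non-variant problem is exactly the special case of our formulation in which every task $t_k$ has a singleton configuration set $T_k = \{\tau_{k,1}\}$. Hence, given any instance of the non-variant problem, the trivial map that wraps each task's single requirement vector as its unique configuration — leaving $R$, $C$, $\textbf{W}$, $\textbf{V}$, $Cost$, $U$, and the target value untouched — yields an instance of our problem with the same feasible solutions and identical utilities, so yes-instances correspond to yes-instances in both directions. This transformation is computable in linear time, establishing NP-hardness; combined with membership in NP, we obtain NP-completeness.

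There is no deep obstacle here; the one point requiring care is the bookkeeping around input size, namely noting that $C$ is given explicitly in the input tuple so that ``polynomial in the input'' is measured against $|C|$ as well, which is what keeps both the certificate and the verifier polynomial despite $C$ potentially being large relative to $|R|$. If a fully self-contained hardness argument were desired, not invoking the cited NP-completeness of the non-variant problem, I would instead give a direct reduction from Exact Cover by 3-Sets (or Set Cover), encoding set elements as capabilities and candidate sets as robots or coalitions, and choosing rewards and costs so that a sufficiently high-utility assignment corresponds precisely to a valid cover; but the special-case reduction above is the shorter route.
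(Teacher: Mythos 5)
Your proposal is correct and follows the same route as the paper: membership in NP via polynomial-time verification of a candidate assignment set, and NP-hardness by observing that the task allocation problem without variants is the special case where every configuration set is a singleton. Your version simply spells out the certificate, the verifier's checks, and the input-size bookkeeping in more detail than the paper does.
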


The proof is straightforward as verifying the solution of this problem would only take polynomial time,
so the problem is in NP. 
Furthermore, since the task allocation problem without task variants is clearly a special case of this new formulation, which is NP-complete, this new problem must also be NP-complete.



\section{Solution Methods}

Since the new problem is NP-complete, instead of looking for exact solutions, 
we propose to study approximate solutions. 

\subsubsection{Random Task Configuration:}
The first thought is to randomly pick from the set of task configurations for each task,
which essentially turns the new problem into a task allocation problem without variants. 
We can then apply any of the state-of-the art task allocation methods. 
This also becomes our baseline approach for comparison.
This method clearly would perform poorly in situations
where a very bad task configuration has been chosen, e.g., 
it renders all the remaining tasks achievable.

\subsubsection{Flattening Formulation:}

A better idea is to try to convert the new problem into a problem without task variants such that prior task allocation solutions can be applied. 
An obvious solution we consider here is a flattening approach that
treats every task configuration as an independent task.
This ``flattens'' the extra dimension of task variants and allows us to consider all possible task configurations at once while allowing prior methods to be directly applied.
A remaining problem, of course, is 
that this formulation can potentially lead to invalid solutions,
since the same task may be assigned multiple times as different task configurations.
This seems to imply that we cannot consider different task configurations at the same time. 
The dilemma here, hence, is
to incorporate the influences among the task variants when making assignments
while preserving the validity of solutions. 
We will show soon that this in fact is not difficult at all. 

\begin{figure}[!h]
\centering
\includegraphics[width=0.6\columnwidth]{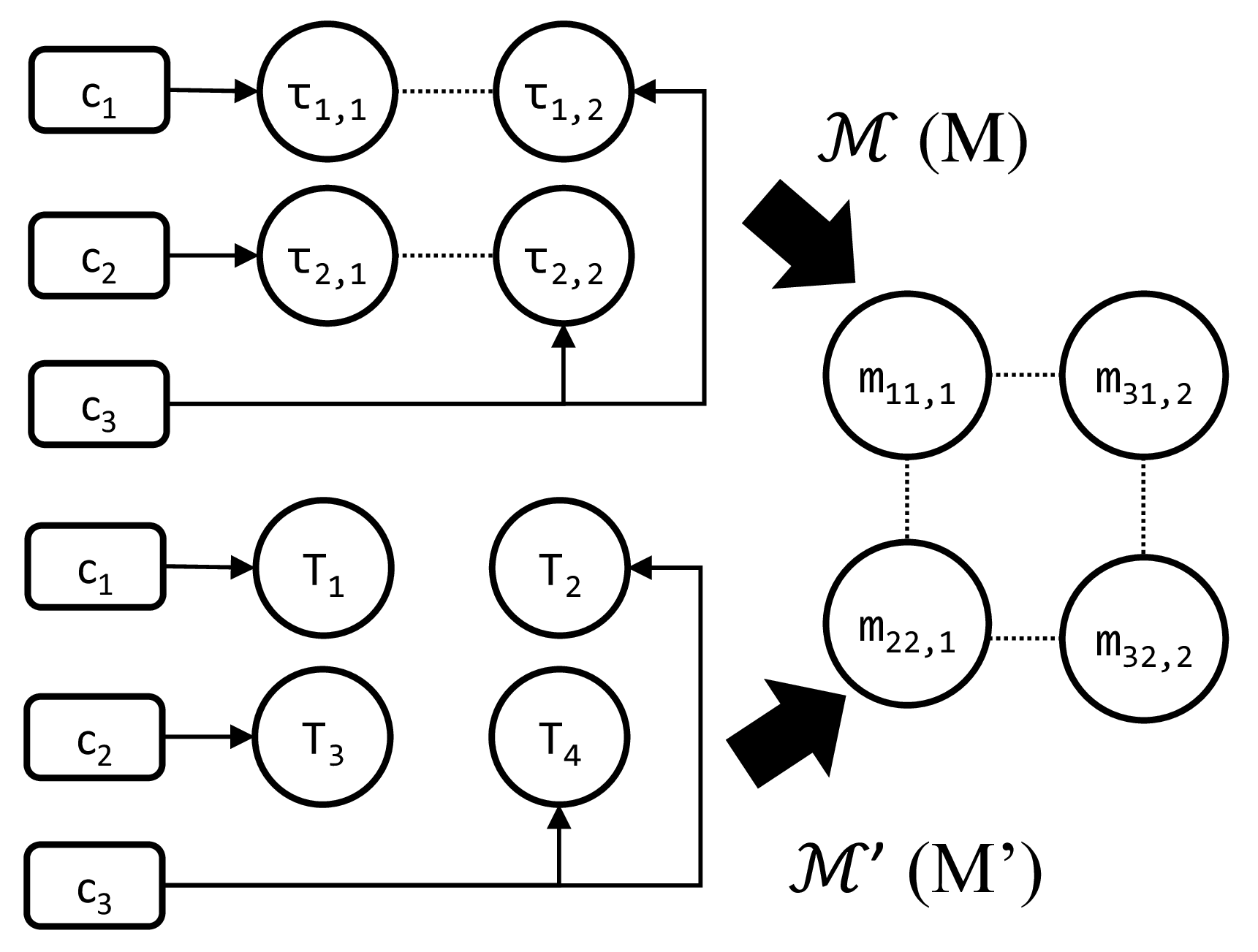}
\caption{An illustration of how our methods maintain valid solutions after flattening. Circles on the left hand side represent task configurations and squares represent coalitions. Circles on the right represent assignments. Arrows indicate a feasible assignment and
dotted lines represent conflicts, e.g., $c_1$ and $c_2$ above conflict (due to overlapping coalitions). The top on the left is a problem without flattening while the bottom with flattening. 
To maintain valid solutions, 
the intuition is to maintain the conflicts in the new problem formulation by updating the definition of $\mathcal{M}$. 
}
\label{fig1}
\end{figure}

\subsection{Maximum Utility with Flattening (FlatMaxUtil)}
First, we introduce a natural greedy heuristic that selects an assignment that maximizes the utility among those remaining at every greedy step,
similar to those in~\cite{service_coalition_2011}.
Since we consider all task variants as separate tasks,
we need to ensure the validity of our solution. A simple way to achieve this is to eliminate assignments to all variants of an assigned task at each greedy iteration.
At each step, the following metric is to be maximized by the chosen assignment:

\begin{equation}
m^\lambda = \underset{m_{xy} \in \mathcal{M'}(\lambda)}{\max} U(m_{xy})
\label{eqfmu}
\end{equation}
where $m_{xy}$ refers to the assignment of coalition $x$ to task $y$. Note that given the flattened formulation, we no longer need to consider the task configuration. 
A special note on the definition of $\mathcal{M}'(\lambda)$ above,
where $\lambda$ refers to the greedy step,
$\mathcal{M}'(\lambda)$ represents the remaining valid assignments to be considered,
and $m^\lambda$ refers to the assignment chosen at the greedy step. 
In~\cite{service_coalition_2011},
there is a definition of $\mathcal{M}(\lambda)$,
where assignments that have coalitions overlapping with the chosen assignment $m^\lambda$ or for the same task will be removed for the next iteration. 
In our formulation, in order to maintain the validity
of the solution, we change $\mathcal{M}(\lambda)$ to $\mathcal{M}'(\lambda)$, which additionally removes assignments that represent different task configurations for the chosen task. 
In this way, we have preserved all the task configurations to be considered at any greedy step
while ensuring that no invalid solution will be produced. 
Fig. \ref{fig1} provides an illustration of this intuition. 
\begin{theorem}
Applying FlatMaxUtil to the ST-MR-IA problem \textbf{with task variants} without restricting the maximum coalition size yields a worst case ratio $\theta = |R + T|$, while restricting the maximum coalition size to be k yields a worst case ratio of $\theta = k+2$.
\end{theorem}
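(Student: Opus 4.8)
\emph{Proof proposal.} The plan is to reduce the claim to the worst-case analysis of the maximum-utility greedy heuristic for the task allocation problem \emph{without} variants in~\cite{service_coalition_2011}, and to show that the one new ingredient --- the additional removal of sibling task configurations in $\mathcal{M}'(\lambda)$ --- costs nothing in the accounting. First I would observe that the flattened instance (one task per configuration $\tau_{k,l}$, with requirement vector $P_{k,l}$ and reward $\textbf{V}[k]$) is itself a legal ST-MR-IA instance without variants, and that on it FlatMaxUtil coincides with the heuristic of~\cite{service_coalition_2011} except that $\mathcal{M}'(\lambda)\subseteq\mathcal{M}(\lambda)$ also discards assignments to $\tau_{k,l'}$, $l'\neq l$, once a configuration $\tau_{k,l}$ of $t_k$ has been chosen. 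Consequently the set $S$ returned by FlatMaxUtil, read back as ``achieve $t_k$ via $\tau_{k,l}$'', assigns at most one configuration per original task, hence is feasible for the problem with variants, and its utility is the same under either reading. So it suffices to compare $U(S)$ with the utility of an optimal solution $S^\ast$ of the problem with variants.

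Next I would set up the standard charging argument. Lift $S^\ast$ to a solution $\widehat{S}^\ast$ of the flattened instance by sending ``$c_j$ achieves $t_k$ via $\tau_{k,l}$'' to $m_{jk,l}$; then $\widehat{S}^\ast$ is feasible in the flattened instance, $U(\widehat{S}^\ast)=U(S^\ast)$, and it uses at most one configuration of each original task. Every member of $\widehat{S}^\ast$ lies in $\mathcal{M}'(0)$, and since FlatMaxUtil halts only when no remaining assignment has positive utility --- while every originally feasible assignment survives in the flattened instance --- no positive-utility member of $\widehat{S}^\ast$ can still be available at termination. Hence each such $m^\ast$ is eliminated at a unique greedy step; charge it to the assignment $m^\lambda$ picked at that step. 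This partitions the positive-utility part of $\widehat{S}^\ast$ into blocks $B(m^\lambda)$, and for every $m^\ast\in B(m^\lambda)$ we have $U(m^\lambda)\ge U(m^\ast)$, because $m^\ast$ was still in $\mathcal{M}'(\lambda)$ when $m^\lambda$ was chosen.

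The crux is to bound $|B(m^\lambda)|$. Writing $m^\lambda=m_{jk,l}$, an assignment $m^\ast=m_{j'k'l'}\in\widehat{S}^\ast$ is killed at step $\lambda$ only if (i) $c_{j'}\cap c_j\neq\emptyset$, (ii) it is an assignment to $\tau_{k,l}$ itself, or (iii) $k'=k$ but $l'\neq l$. Since each robot of $c_j$ appears in at most one assignment of $\widehat{S}^\ast$, case (i) accounts for at most $|c_j|$ members --- at most $k$ when the coalition size is capped at $k$, at most $|R|$ in general. Crucially, because $\widehat{S}^\ast$ uses at most one configuration per original task, cases (ii) and (iii) together contribute \emph{at most one} member, exactly mirroring the single ``same task'' assignment blocked in the variant-free analysis of~\cite{service_coalition_2011}. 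Feeding the resulting bound on $|B(m^\lambda)|$ and the non-negativity of the greedily chosen utilities into $U(S^\ast)=U(\widehat{S}^\ast)=\sum_\lambda\sum_{m^\ast\in B(m^\lambda)}U(m^\ast)\le\sum_\lambda|B(m^\lambda)|\,U(m^\lambda)$, and then carrying out the remaining bookkeeping exactly as in~\cite{service_coalition_2011}, yields $U(S^\ast)\le\theta\,U(S)$ with $\theta=k+2$ in the capped case and $\theta=|R+T|$ in the uncapped case.

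The main obstacle --- and the only genuine departure from~\cite{service_coalition_2011} --- is verifying that the extra sibling-configuration removals in $\mathcal{M}'(\lambda)$ neither enlarge the blocks $B(m^\lambda)$ beyond the variant-free bound nor make FlatMaxUtil stop early while some positive-utility, originally feasible assignment is still available. The first point is exactly the ``at most one configuration per original task in $S^\ast$'' observation above; the second follows because discarding sibling configurations of an \emph{already assigned} task never removes an assignment that could still appear in a valid solution. Once these two points are pinned down, every other step is a transcription of the variant-free proof.
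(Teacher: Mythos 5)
Your proof is correct, but it takes a genuinely different route from the paper's. The paper argues by a black-box reduction: for each original task $t_k$ it adds a dummy robot $r^k$ (with a unique, unused capability) to the coalition of every assignment for every configuration of $t_k$, so that the extra ``sibling-configuration'' conflicts encoded in $\mathcal{M}'(\lambda)$ become ordinary overlapping-coalition conflicts in a standard variant-free instance; the bounds of~\cite{service_coalition_2011} ($|R|$ and $k+1$) are then invoked unchanged on the modified instance, which has $|R|+|T|$ robots and maximum coalition size $k+1$, giving $|R+T|$ and $k+2$. You instead re-open the charging argument and run it directly on the flattened instance, observing that because the lifted optimum $\widehat{S}^\ast$ uses at most one configuration per original task, the same-configuration and sibling-configuration removals together eliminate at most one member of $\widehat{S}^\ast$ per greedy step --- exactly the single ``same task'' term of the variant-free analysis. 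The paper's reduction is shorter and needs no access to the internals of the base proof, but it pays for the dummy robots with a $+|T|$ and a $+1$ in the bounds; your white-box accounting actually establishes the sharper ratios $k+1$ (capped) and essentially $|c_j|+1$ (uncapped), which imply the stated $k+2$ and $|R+T|$ a fortiori. It is worth noting explicitly in your write-up that you have proven something slightly stronger than the theorem claims, rather than asserting that the bookkeeping ``yields $\theta=k+2$''; as written, that final sentence understates your own block-size bound. The only dependency to flag is that your argument presumes the base bound in~\cite{service_coalition_2011} is obtained by this standard charging scheme; since your derivation is self-contained, this is a presentational rather than a mathematical issue.
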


\begin{proof}
Given a task allocation problem with task variants, first, for each task $t_k$, we change the problem by adding a robot $r^k$ that is shared among all the assignments for all the task configurations for $t_k$.
Furthermore, we modify the problem such that each $r^k$ has only a unique capability that is not used by any task. 
This essentially allows at most one of the assignments for a task being made,
which is exactly how we ensure a valid solution. 
As a result, 
the bounds in~\cite{service_coalition_2011} are directly applicable to the flattened problem
after this modification (which are $|R|$ and $k+1$ above, respectively, without task variants). 
Since we add a total of $T$ robots and the maximum coalition size is increased by $1$, 
we have the bounds holds. 
\end{proof}

\subsection{Resource Centric with Flattening (FlatRC)}
The \textit{FlatMaxUtil} method is expected to perform poorly in many scenarios, as it only considers the utility of assignment for each greedy choice and does not consider the influences of assignments on each other.
This effect is first observed in~\cite{zhang_considering_2013}.

\subsubsection{Motivating Example:}
As a motivating example, consider a task allocation problem with 3 tasks, 2 variants per task:
$T = \{t_1, t_2, t_3\}, T_1 = \{\tau_{1,1}, \tau_{1,2}\}, T_2 = \{\tau_{2,1}, \tau_{2,2}\}$ and $T_3 = \{\tau_{3,1}, \tau_{3,2}\}$,
with capability requirements:
$P_{1,1} = (2, 0, 0, 0), P_{1,2} = (1, 1, 0, 1), P_{2,1} = (1, 1, 1, 0), P_{2,2} = (1, 1, 0, 1).$ 
Suppose we only have two robots with the first capability, but sufficient robots with the other three capabilities. Also assume that robots have at most one unit of each capability, all tasks have equal rewards, all capabilities have equal costs, and $Cost$ always returns $0$ for all assignments.
Maximizing solely on utility will cause $\tau_{1,1}$ to be chosen, preventing assignment of either variant of $t_2$, reducing the utility of the final solution.



Similar to the \textit{ResourceCentric} heuristic in~\cite{zhang_considering_2013},
we use a similar heuristic that maximizes the following metric after flattening:


\begin{equation}
  \rho_{xy} = U(m_{xy}) - \underset{m_{jl} \in M_{xy}'(\lambda)}{\sum} \frac{1}{|M'_{jl}(\lambda)|} \cdot U(m_{jl})
\end{equation}
where $M'_{jl}(\lambda)$ represents the set of assignments conflicting with $m_{jl}$ (assignment of $c_j$ to task $t_l$ after flattening), with conflicts defined similarly to how we remove conflicting assignments in $\mathcal{M}'$ in Eq. \eqref{eqfmu}, differing from $\mathcal{M}_{jl}$ in~\cite{zhang_considering_2013}.
It follows that the approximation bounds remain similar to those in~\cite{zhang_considering_2013}:
\begin{coro}
Applying FlatRC to the ST-MR-IA problem \textbf{with task variants} while restricting the maximum coalition size to be k yields a worst case ratio of $\theta = min(2k+4, max_{m_{jl} \in S^*}(|M_{jl}'(1)|))$, in which $S^*$ the optimal solution.
\end{coro}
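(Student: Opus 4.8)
The plan is to reuse the reduction from the proof of the FlatMaxUtil theorem. Given an instance of the ST-MR-IA problem with task variants, for each task $t_k$ introduce a dummy robot $r^k$ that is forced into every assignment to every configuration $\tau_{k,l}$ of $t_k$, and give $r^k$ a single fresh capability that appears in no task requirement vector. Call the resulting plain ST-MR-IA instance (without variants, with each flattened configuration treated as its own task) the \emph{modified instance}. Since capability costs are charged against the requirement vectors $P_{k,l}$ rather than against robot capability vectors $B_i$, and since $Cost$ is evaluated per coalition--configuration pair, the utility $U(m_{xy})$ of every assignment is unchanged by this construction. Hence the metric $\rho_{xy}$ that FlatRC maximizes on the original flattened instance equals the ResourceCentric metric of~\cite{zhang_considering_2013} on the modified instance, \emph{provided} the conflict sets line up, which is the crux of the argument.

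First I would verify that $M_{jl}'(\lambda)$ in the original flattened problem is exactly the conflict set used by~\cite{zhang_considering_2013} on the modified instance. There, two assignments conflict when they share a robot or are assigned to the same task; in our setting $M_{jl}'$ additionally declares two assignments conflicting when they target different configurations of the same task. But in the modified instance these coincide: distinct configurations $\tau_{k,l},\tau_{k,l'}$ of $t_k$ now share the dummy robot $r^k$, so the ``extra'' conflicts of $M_{jl}'$ are precisely the robot-sharing conflicts created by the gadget, and the gadget creates no others since each $r^k$ is used only by configurations of $t_k$. Consequently the greedy trace of FlatRC on the original instance coincides step for step with that of ResourceCentric on the modified instance, and an optimal solution $S^*$ of one is an optimal solution of the other with the same total utility (a valid variant solution uses at most one configuration per task, exactly the restriction the $r^k$ gadget imposes, so the dummy robots neither block a better solution nor enable a spurious one).

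With this correspondence established, the bound of~\cite{zhang_considering_2013} transfers to the modified instance verbatim. That bound has the form $\min\big(2k'+2,\ \max_{m_{jl}\in S^*}|M_{jl}(1)|\big)$, where $k'$ is the maximum coalition size of the instance it is applied to and $|M_{jl}(1)|$ is evaluated on that instance. The gadget raises the maximum coalition size from $k$ to $k+1$, so $k'=k+1$ and $2k'+2=2k+4$; and by the conflict-set identity the second term equals $\max_{m_{jl}\in S^*}|M_{jl}'(1)|$ evaluated in the original problem. Taking the minimum yields $\theta=\min\big(2k+4,\ \max_{m_{jl}\in S^*}|M_{jl}'(1)|\big)$, as claimed.

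The step I expect to demand the most care is the conflict-set identity together with the optimality transfer: I must be sure that no assignment involving a dummy robot can improve a solution (true because its fresh capability satisfies no requirement) and that the per-assignment counts $|M_{jl}'(1)|$ are read off consistently in both views, so that the minimum-of-two-terms form is preserved rather than only the two terms separately. A minor bookkeeping issue inherited from the FlatMaxUtil proof is that adding one fresh capability per task nominally enlarges $H$; as there, this is harmless if $H$ is regarded as an upper bound on the capabilities actually in play, and it does not affect any counting argument.
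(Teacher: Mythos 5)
Your proposal is correct and follows essentially the same route as the paper, which simply remarks that the proof ``proceeds nearly identically'' to the FlatMaxUtil argument: the same dummy-robot gadget raises the maximum coalition size from $k$ to $k+1$, turning the bound $\min(2k'+2,\max_{m_{jl}\in S^*}|M_{jl}(1)|)$ of the ResourceCentric heuristic into $\min(2k+4,\max_{m_{jl}\in S^*}|M'_{jl}(1)|)$. Your explicit check that the gadget's robot-sharing conflicts coincide exactly with the extra conflicts in $M'_{jl}$ is the key detail the paper leaves implicit, and it is argued correctly.
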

The proof proceeds nearly identically to that shown for FlaxMaxUtil given the bound in~\cite{zhang_considering_2013} (which is $min(2k+2, max_{m_{jl} \in S^*}(|M_{jl}(1)|))$) .



\subsubsection{Complexity Analysis:}
The algorithm 
for FlatRC follows almost
identically to \textit{ResourceCentric} in \cite{zhang_considering_2013}. 
As we now have multiple configurations per task, the worst case complexity is increased, but only linearly.
For clarity, let $|T_{max}| = \underset{t_k \in T}{max}(|t_k|)$, the size of the largest task  configuration set. Then, 
the complexity is bounded by $O(|T||C||T_{max}||\mathcal{M}|)$, where $\mathcal{M}$ is the set of assignments. Each greedy step is bounded by $O(|\mathcal{M}|^2|)$. As there can be at most $min(|R|, |T||T_{max}|)$ assignments, the overall complexity is bounded by $O(min(|R|, |T||T_{max}|) \cdot |T|^2|T_{max}|^2|C|^2)$.



\subsection{Approximated FlatRC (FlatRCA)}
To improve the computational performance,
we also adapt the \textit{ResourceCentricApprox} heuristic in~\cite{zhang_considering_2013}  to our problem, after flattening. Following a similar reasoning, we wish to reduce the complexity of our algorithm as $|C|$ grows exponentially with $|R|$.
To this end, we compute $\beta_{il}$ = , which measures how much task $t_l$ (note that this is after flattening) depends on robot $r_i$. Then we compute the average expected loss for each \textit{task} $t_l$ due to the assignment of $r_i$, $\varphi_{il}$. Finally, we compute the greedy criteria $\hat\rho_{xy}$ from this value and the utility of each remaining assignment:

\begin{eqnarray}
\beta_{il} = \frac{ |M'_{il}| }{ |M'_{l}| } \\
\varphi_{il} = \overline{ \beta_{il} \cdot U(m_{jl}) }_{m_{jl} \in \mathcal{M}'_{i}(\lambda)} \\
\hat\rho_{xy} = U(m_{xy}) - \underset{r_{i} \in c_{x}}{\sum} \underset{l \neq y}{\sum}  \varphi_{il} 
\end{eqnarray}

\section{Simulation Results}
In this section, we provide simulation results for the task variant problem. We focus mainly on randomly generated allocation scenarios, varying key parameters. In all cases when evaluating performance ratios we compare against the upper bound of the optimal solution as \cite{shehory_methods_1998}: the sum of the feasible assignments with the maximum utility for each task without checking for conflicts. The costs of each capability (i.e. \textbf{W}) are randomly generated from [0.0, 1.0]. Each task or robot has a 50\% chance to need/provide any capability. Capability values, unless specified otherwise, are generated from [0, 8]. The number of capabilities \textbf{H} is fixed at 7. The maximum size of coalitions is fixed at 5 ($k=5$).
Task rewards (i.e. \textbf{V}) are generated randomly from [100, 200]. \textit{Cost} is defined as a linear function of coalition size, $4n$. Measurements are made over 1000 runs.

We make two comparisons: varying the number of robots and tasks. We also compare time when varying robots. Varying the number of task variants showed similar trends to varying robots and is not shown. Our time analysis we only consider the time required to assign coalitions to tasks.
Note that in most of our results, \textit{FlatRC} and \textit{FlatRCA} overlap significantly.
We show a clear improvement in applying \textit{FlatRC} and \textit{FlatRCA} over the simple greedy heuristic for varying numbers of robots, tasks.

\begin{figure}[!ht]
\centering
\includegraphics[width=0.8\columnwidth]{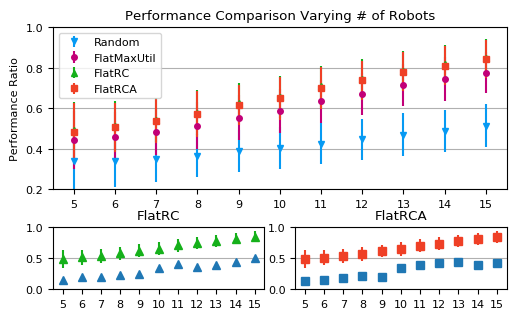}
\caption{Results varying \# of robots available. $|T|$ is fixed at 10 and the maximum \# of configurations per task is 5.}
\label{fig2}
\end{figure}

\begin{figure}[!ht]
\centering
\includegraphics[width=0.7\columnwidth]{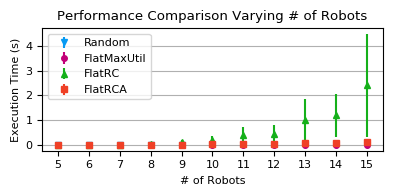}
\caption{Time comparison for Figure~\ref{fig2}.}
\label{fig3}
\end{figure}

\begin{figure}[!ht]
\centering
\includegraphics[width=0.8\columnwidth]{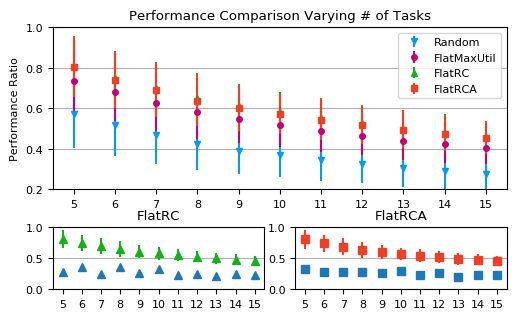}
\caption{Results for varying \# of tasks, $|R|$ is fixed at 8 and the maximum \# of configurations per task is 5.}
\label{fig4}
\end{figure}


\section{Conclusion and Future Work}
First, we introduced a new formulation of the ST-MR-IA problem that allows for more realistic and flexible scenarios of achieving tasks in the form of task configuration variants. A simple but effective method of solving this problem is to ``flatten'' it into a task allocation problem without the variants. With slight modifications, this allows the application of existing greedy heuristics that provide good approximation bounds.
However, this method effectively discards some finer information about the interaction between task variants. It is clear that improved methods that utilize this information may be devised, but the increased complexity of the problem do not make it trivial to do so. In future work, we plan to investigate such a method if it does exist and compare its performance with those discussed in this work.

\bibliography{socs19taskbib}
\bibliographystyle{aaai}

\end{document}